\newtheorem{thm}{Theorem}
\newtheorem{cor}[thm]{Corollary}
\newcommand{\R}{\mathbb{R}}
\newcommand{\la}{\left\langle}
\newcommand{\ra}{\right\rangle}
\let\bs=\boldsymbol
\let\set=\mathcal
\newcommand{\para}[1]{\noindent{\bf #1}}
\def \Diag {\mathrm{Diag}}
\def \diag {\mathrm{diag}}
\def \minimize {\textup{minimize} }
\def \maximize {\textup{maximize}}
\def \subjectto {\textup{subject to}}
\def \temp {\textup{temp}}
\def \gt {\textup{gt}}
\newtheorem{remark}{\textbf{Remark}}
\icmltitlerunning{Semidefinite Relaxation for MAP Estimation}
\begin{document}

\twocolumn[
\icmltitle{Scalable Semidefinite Relaxation for Maximum {\em A Posterior} Estimation}

\icmlauthor{Qixing Huang}{huangqx@stanford.edu}
\icmladdress{Department of Computer Science, Stanford University, Stanford, CA 94305, USA }
\icmlauthor{Yuxin Chen}{yxchen@stanford.edu}
\icmladdress{Department of Electrical Engineering, Stanford University, Stanford, CA 94305, USA}
\icmlauthor{Leonidas Guibas}{guibas@cs.stanford.edu}
\icmladdress{Department of Computer Science, Stanford University,
            Stanford, CA 94305 USA}

% You may provide any keywords that you
% find helpful for describing your paper; these are used to populate

\icmlkeywords{MAP, Semidefinite Programming, Convex Relaxation, Low-rank, Sparsity, ADMM}
\vskip 0.3in
]

\begin{abstract}
Maximum {\em a posteriori} (MAP) inference over discrete Markov random fields is a fundamental task spanning a wide spectrum of real-world applications, which is known to be NP-hard for general graphs. In this paper, we propose a novel semidefinite relaxation formulation (referred to as SDR) to estimate the MAP assignment. Algorithmically, we develop an accelerated variant of the alternating direction method of multipliers (referred to as SDPAD-LR) that can effectively exploit the special structure of the new relaxation. Encouragingly, the proposed procedure allows solving SDR for large-scale problems,  e.g., problems on a grid graph comprising hundreds of thousands of variables with multiple states per node. Compared with prior SDP solvers, SDPAD-LR is capable of attaining comparable accuracy while exhibiting remarkably improved scalability, in contrast to the commonly held belief that semidefinite relaxation can only been applied on small-scale MRF problems. We have evaluated the performance of SDR on various benchmark datasets including OPENGM2 and PIC in terms of boththe quality of the solutions and computation time. Experimental results demonstrate that for a broad class of problems, SDPAD-LR outperforms state-of-the-art algorithms in producing better MAP assignments in an efficient manner.

%However, due to the popularity and wide applicability of MAP formulations in various domains, deriving conditions under which the MAP estimation problem can be exactly and tractably solved is of great theoretical and practical importance. So far theoretical guarantees have been derived only for very specialized models like tree graphs or convex potential functions.

%In this paper, we introduce a family of data-dependent exact solution conditions by developing and analyzing a novel semidefinite relaxation (called SDR). Specifically, when the potential functions are given by noisy measurement among variables, we provide tight constant bounds on the level of noise for which MAP inference can be solved perfectly.

\end{abstract}

\section{Introduction}

Computing the maximum {\em a posteriori} (MAP) assignment in a graphical model is a central inference task spanning a wide scope of scenarios \cite{wainwright2008graphical}, ranging from traditional applications in graph matching, stereo reconstruction, object detection, error-correcting codes, gene mapping, etc., to a more recent application in estimating consistent object orientations from noisy pairwise measurements~\cite{crandall2011discrete}. For general graphs, this problem is well-known to be NP-hard~\cite{Shimony:1994:FMB}. However, due in part to its importance in practice,  a large body of algorithms have been proposed to approximate MAP estimates by solving various convex relaxation formulations.

%Despite the practical success of these algorithms, theoretical understanding --- under what conditions they lead to global optimal solutions --- has received little attention. It is known that the MAP estimation problem can be solved exactly for tree graphs or when the potential functions are submodular. However, these are specialized settings that rarely apply in practical problems. This motivates us to develop MAP estimation algorithms that admit more general exact inference conditions and which can lead to simple and powerful theoretical predictions when adapted to a large class of application scenarios.

Among those methods based on convex surrogates, semidefinite relaxation usually strictly dominates other formulations based on linear programming or quadratic programming in terms of solution quality. Despite its superiority in obtaining more accurate estimates, however, the most significant challenge that limits the applicability of any semidefinite relaxation paradigm on real problems is efficiency.
%Most semidefinite optimization procedures involve performing a full matrix eigen-decomposition at each iteration, which is computationally expensive.
So far existing general-purpose SDP solvers can only handle problems with small dimensionality.

In this paper, we propose a novel semidefinite relaxation approach (referred to as SDR) for second-order MAP inference in pairwise undirected graphical models. Our key observation is that the marginalization constraints in a typical linear programming relaxation (c.f.\cite{kumar2009analysis}) can be subsumed by combing a semidefinite conic constraint with a small set of linear constraints. As a result, SDR admits a concise set of nicely decoupled constraints, which allows us to develop an accelerated variant (referred as SDPAD-LR) of the alternating direction method of multipliers method (ADMM) that is scalable to very large-scale problems.

%In particular, our algorithm, referred to as SDPAD-LR, is able to solve SDR with accuracy comparable to state-of-the-art generic SDP solvers (e.g. SDPAD, MOSEK), while enjoying remarkably improved efficiency and scalability.
On a standard PC, we have successfully applied SDR on dense problems of dimensions of ($\#\textup{states}\times \# \textup{variables}$) up to five thousand, and on grid-structured problems up to $10^5$ variables each with dozens of states per node.

Practically, SDPAD-LR performs remarkably well on a variety of problems. We have evaluated SDPAD-LR on two collections of benchmark datasets: OPENGM2~\cite{Kappes:2013:OPENGM2} and a probabilistic inference challenge \cite{PIC:2011}. Each benchmark consists of multiple categories of problems derived from various MAP estimation tasks. Experimental results demonstrate that SDPAD-LR outperforms the state-of-the-art algorithms in computational speed, while often obtaining better MAP estimates.

\subsection{Background}

There is a vast literature concerning MAP estimation over discrete undirected graphical models and it is beyond the scope of this paper to discuss all existing algorithms. Interested readers are referred to \cite{wainwright2008graphical} for an in-depth introduction to this topic. In the following, we focus on methods that involve convex relaxation, which are the most relevant to our approach.

Many prior convex relaxation techniques are derived from the original graph structure underlying the MAP estimation problem, among which linear programming relaxation (LPR) methods~\cite{journals/siamdm/ChekuriKNZ04,Wainwright:2005:MAP} are the most popular. In addition to LPR, researchers have considered alternative convex relaxations, e.g., quadratic relaxation (QP-RL)~\cite{ravikumar2010message} and second-order cone relaxation (SOCP-MS)~\cite{kumar2009analysis}. In the seminal work of~\cite{kumar2009analysis}, the authors evaluate various convex relaxation approaches, and assert that LPR dominates QP-RL and SOCP-MS. However, as will be shown later, LPR is further dominated by a standard SDP relaxation~\cite{wainwright2008graphical}, which is one of the main foci of this paper.
%\cite{,globerson2007fixing,sontag2012tightening}

A recent line of approaches have aimed at obtaining tighter convex relaxations by incrementally adding higher-order interactions to enforce proper marginalization over groups of variables~\cite{sontag2012tightening,conf/eccv/KomodakisP08,journals/jmlr/BatraNK11}. Despite the practical success of these approaches, it remains an open problem to analyze their behavior --- for example, to decide whether a polynomial number of clusters are sufficient.

There have been several attempts in applying semidefinite relaxation to obtain MAP assignment~\cite{Torr:2003:SDP,olsson-eriksson-etal-cvpr-07,Wang:2013:FSA,PengHSX12}. However, most of these methods are primarily designed for binary MAP estimation problems. In a recent work, \cite{PengHSX12} considered a general MAP estimation problem, where each variable has multiple states. The key difference between the proposed formulation and that of ~\cite{PengHSX12} is that we utilize the semidefinite cone constraint to prune redundant linear marginalization constraints. This leads to a concise set of loosely decoupled constraints, which is important in developing effective optimization paradigms.
%In addition, we also provide an interesting analysis of the proposed semidefinite relaxation.

%The analysis of the proposed SDP relaxation is motivated from recent advances in low-rank matrix recovery, which root in analyzing the KKT optimality conditions of convex relations. However, due to the special structure of the SDP relaxation, the induced KKT optimality conditions in our setting turn out to be very different. This leads to novel techniques to construct the dual certificates and to derive the exact solution conditions.

%\subsection{Organization and Notation}
\subsection{Notation}

%The remainder of the paper is organized as follows. In Section~\ref{Sec:Formulation}, we introduce the proposed semidefinite relaxation. We then present how to efficiently solve the semidefinite program in Section~\ref{Sec:Optimization}. In Section~\ref{Sec:Results}, we evaluate the proposed approach on benchmark datasets and compare its performance against state-of-the-art techniques.  Section~\ref{Sec:Conclusions} concludes the paper with a summary of our findings.

Before proceeding, we introduce a few notations that will be used throughout the paper. For any linear operator $\set{A}$, we let $\set{A}^{\star}$ represent its conjugate operator. Denote by $\R^{N\times M}_{+}$ the set of $N\times M$ matrices with nonnegative entries, and $(\cdot)_{+}:\R^{N\times M}\rightarrow \R^{N\times M}_{+}$ the projection operator onto $\R^{N\times M}_{+}$. For any symmetric matrix $\bs{M}$, we use $\bs{M}_{\succeq \bs{0}}$ to represent the projection of $\bs{M}$ onto the positive semidefinite cone. Finally, we denote by $\|\boldsymbol{X}\|_\mathrm{F}$ the Frobenius norm of a matrix $\boldsymbol{X}$.

\section{MAP Estimation and SDP Relaxation}
\label{Sec:Formulation}

We start with state configurations over $n$ discrete random variables $\set{X} = \{x_1, \cdots, x_{n}\}$. Without loss of generality, assume that each $x_i$ takes values in a discrete state set $\{1,\cdots,m\}$. Consider a pairwise Markov random field (MRF) $\set{G}$ parameterized by the potentials (or sufficient statistics) $w_i(x_i)$ for all vertices and $w_{ij}(x_i,x_j)$ for all edges $(i,j)\in \set{G}$. The energy (or log-likelihood) associated with this MRF is given by
% Given a graph $\set{G} = (\set{V}, \set{E})$ with $n$ vertices, potentials for all variables $w_i(x_i) (1\leq i \leq n)$ and potentials $w_{ij}(x_i,x_j)$ for all edges $(i,j)\in \set{E}$, define the objective function
\begin{equation}
f(\set{X}) = \sum\limits_{i=1}^{n}w_i(x_i) + \sum\limits_{(i,j)\in \set{E}}w_{ij}(x_i, x_j).
\end{equation}
The goal of MAP estimation is then to compute the configuration of states that maximizes the energy -- the most probable state assignment $\set{X}_M$.

\subsection{Semidefinite Programming Relaxation (SDR)}

MAP estimation over discrete sets is an NP-hard combinatorial problem, and can be cast as an integer quadratic program (IQP). Denote by $\bs{x}_i = (x_{i,1}, \cdots, x_{i,m})^{\top}\in\{0,1\}^m$ a binary vector such that  $x_{i,j} = 1$ if and only if $x_i = j$. Then MAP estimation is equivalent to the following integer program.
\begin{align}
\textbf{(IQP):} \quad  \underset{\bs{x} \in \{0,1\}^{nm}}{\maximize} & \quad  \sum\limits_{i=1}^{n}\la \bs{w}_i, \bs{x}_i \ra + \sum\limits_{(i,j)\in \set{G}}\la \bs{W}_{ij}, \bs{x}_i \bs{x}_j^{\top} \ra \nonumber \\
\subjectto & \quad \bs{1}^{\top}\bs{x}_i = 1, \qquad 1\leq i \leq n, \label{Eq:Linear:Cons0}
\end{align}
where $\bs{w}_i$ and $\bs{W}_{ij}$ encode the corresponding potentials.

The hardness of the above IQP arises in two aspects: (i) $\bs{x}$ are binary-valued, and (ii) the objective function is a quadratic function of these binary variables. These motivate us to relax the constraints in some appropriate manner, leading to our semidefinite relaxation. In the sequel, we present the proposed relaxation in a  step-by-step fashion.

\begin{enumerate}[1)]
\item In the same spirit as existing convex formulations (e.g.,~\cite{kumar2009analysis,PengHSX12}), we introduce a binary block matrix $\bs{X}: = \bs{x}\bs{x}^{\top} \in \{0,1\}^{nm\times nm}$ to accommodate quadratic objective terms:
$$
\bs{X} = \left(
      \begin{array}{cccc}
        \Diag(\bs{x}_1) & \bs{X}_{12} & \cdots & \bs{X}_{1n} \\
        \bs{X}_{12}^{\top} & \Diag(\bs{x}_2) & \vdots & \vdots \\
        \vdots & \cdots & \ddots & \vdots \\
        \bs{X}_{1n}^{\top} & \cdots & \cdots & \Diag(\bs{x}_n) \\
      \end{array}
    \right),
$$
which apparently exhibits the following properties:
% each diagonal block obeys $\bs{X}_{ii} = \bs{x}_i\bs{x}_i^T = \Diag(\bs{x}_i)$ for all $1\leq i\leq n$.

%It is straightforward to see that each diagonal block obeys
\begin{equation}
\bs{X}_{ii} = \bs{x}_i\bs{x}_i^\top = \Diag(\bs{x}_i), \qquad 1\leq i \leq n.
\label{Eq:Diagonal:Cons}
\end{equation}

\item The non-convex constraint $\bs{X} = \bs{x}\bs{x}^{\top}$ is then relaxed and replaced by $\bs{X} \succeq \bs{x}\bs{x}^{\top}$, which by Schur complement condition is equivalent to the following semidefinite conic constraint :
\begin{equation}
\left(
  \begin{array}{cc}
    1 & \bs{x}^{\top} \\
    \bs{x} & \bs{X} \\
  \end{array}
\right) \succeq \bs{0}.
\label{Eq:SDP:Cons0}
\end{equation}

\item The binary constraints $\bs{x} \in \{0,1\}^{nm}$ and $\bs{X} \in \{0, 1\}^{nm \times nm}$ are replaced by weaker linear constraints 
    $$
    \bs{X} \geq \bs{0}.
    $$
    Note that the constraints $\bs{0} \leq \bs{x} \leq \bs{1}$ and $\bs{X}\leq \bs{1}\cdot\bs{1}^\top$ are essentially subsumed by the constraints (\ref{Eq:Linear:Cons0}), (\ref{Eq:Diagonal:Cons}), and~(\ref{Eq:SDP:Cons0}) taken together. For the sake of numerical efficiency, we further relax the non-negative constraint $\bs{X} \geq \bs{0}$ to be
\begin{equation}
\bs{X}_{ij} \geq \bs{0}, \quad (i,j)\in \set{G}.
\label{Eq:Block:NonNeg}
\end{equation}
As we will see later, this relaxation is crucial in accelerating SDP solvers for large-scale problems.
\end{enumerate}

\begin{remark}
The non-negativity constraints described in (\ref{Eq:Block:NonNeg}) are necessary since otherwise SDR becomes loose for submodular functions. Below is an example in the presence of 2 variables each having 2 states:
$$
\bs{w}_1 = \left[\begin{array}{c}
                   2 \\
                   0
                 \end{array}
\right], \quad \bs{w}_2 = \left[\begin{array}{c}
                                  -3 \\
                                  0
                                \end{array}
 \right], \quad \bs{W}_{12} = \left[\begin{array}{cc}
                                      0 & 2 \\
                                      2 & 0
                                    \end{array}
 \right].
$$
It is clear that $\bs{W}_{12}$ satisfies the submodular property. However, the optimizer of SDR after dropping the constraint $\bs{X}_{ij}\geq \bf{0}$  is given by
$$
\bs{x}_1 = \frac{1}{3}\left[\begin{array}{c}
                  1 \\
                  2
                 \end{array}
\right], \ \bs{x}_2 = \frac{1}{9}\left[\begin{array}{c}
                                  8 \\
                                  1
                                \end{array}
 \right], \ \bs{X}_{12} = \frac{1}{9}\left[\begin{array}{cc}
                                      4 & -1 \\
                                      4 & 2
                                    \end{array}
 \right],
$$ which does not obey the non-negativity constraint on $\bs{X}$.
\end{remark}

The feasibility constraints~(\ref{Eq:Linear:Cons0}),(\ref{Eq:Diagonal:Cons}), (\ref{Eq:SDP:Cons0}) and (\ref{Eq:Block:NonNeg}) taken collectively give rise to the following semidefinite relaxation (SDR) formulation for MAP estimation:
\begin{align}
\textbf{(SDR): } \text{ } \underset{\bs{x},\bs{X}}{\maximize}  & \quad \sum\limits_{i=1}^{n}\langle \bs{w}_i, \bs{x}_i \rangle + \sum\limits_{(i,j)\in \set{G}}\langle \bs{W}_{ij}, \bs{X}_{ij} \rangle \nonumber \\
\subjectto & \quad \left(
               \begin{array}{cc}
                 1 & \bs{x}^{\top} \\
                 \bs{x} & \bs{X} \\
               \end{array}
             \right) \succeq \bs{0},
\quad  \label{Eq:SDR:SDP} \\
                        & \quad \bs{X}_{ii} = \Diag(\bs{x}_i), \  1\leq i \leq n, \label{Eq:SDR:Linear1} \\
                        & \quad \bs{1}^{\top}\bs{x}_i  = \bs{1}, \qquad\ \ \  1\leq i \leq n, \label{Eq:SDR:Linear2} \\
                         & \quad \bs{X}_{ij} \geq \bs{0},  \qquad\quad \ (i,j) \in \set{G}.
\label{Eq:SDP:Cons}
\end{align}

\subsection{Comparison with Prior Relaxation Heuristics}

\subsubsection{\bf Superiority over LP relaxations.} Careful readers will remark that there might exist other convex constraints on $\bs{X}$ and $\bs{x}$ that we can enforce to tighten the proposed semidefinite relaxation. One alternative is the following marginalization constraints, which have been widely invoked in LP relaxation for MAP estimation:
\begin{equation}
\bs{X}_{ij}\bs{1} = \bs{1}, \quad  \bs{X}_{ij}^{\top}\bs{1} = \bs{1},\qquad 1\leq i < j \leq n.
\label{Eq:Linear:Cons}
\end{equation}
Somewhat unexpectedly, these constraints turn out to be redundant,
%They are inherently subsumed by the semidefinite constraint together with some linear constraints, 
as asserted in the following theorem.
\begin{thm}\label{thm_linear_redundancy} Any feasible solution $\boldsymbol{X}$ to SDR (i.e. any $\boldsymbol{X}$ obeying the feasibility constraints of SDR) necessarily satisfies
\begin{equation}
\bs{X}_{ij}\bs{1} = \bs{1}, \quad  \bs{X}_{ij}^{\top}\bs{1} = \bs{1},\qquad 1\leq i < j \leq n.
%\label{Eq:Linear:Cons}
\end{equation}
\end{thm}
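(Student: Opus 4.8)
The plan is to show that the semidefinite conic constraint~(\ref{Eq:SDR:SDP}), once combined with the block-diagonal identities~(\ref{Eq:SDR:Linear1}) and the normalization~(\ref{Eq:SDR:Linear2}), already pins down the row and column sums of every off-diagonal block of $\bs{X}$, so that~(\ref{Eq:Linear:Cons}) adds nothing. By the Schur complement characterization used to derive~(\ref{Eq:SDR:SDP}), feasibility of~(\ref{Eq:SDR:SDP}) is equivalent to $\bs{S}:=\bs{X}-\bs{x}\bs{x}^{\top}\succeq\bs{0}$. The first step is to look at the diagonal blocks of $\bs{S}$: by~(\ref{Eq:SDR:Linear1}) we have $\bs{S}_{ii}=\Diag(\bs{x}_i)-\bs{x}_i\bs{x}_i^{\top}$, and since $\Diag(\bs{x}_i)\bs{1}=\bs{x}_i$ while $\bs{x}_i^{\top}\bs{1}=1$ by~(\ref{Eq:SDR:Linear2}), the all-ones vector satisfies $\bs{S}_{ii}\bs{1}=\bs{x}_i-\bs{x}_i=\bs{0}$, and in particular $\bs{1}^{\top}\bs{S}_{ii}\bs{1}=0$.

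The second step promotes this from the diagonal block to the whole matrix using positive semidefiniteness. For each vertex $i$, let $\bs{u}_i\in\R^{nm}$ be the vector equal to $\bs{1}$ on the $i$-th block of coordinates and zero elsewhere; then $\bs{u}_i^{\top}\bs{S}\bs{u}_i=\bs{1}^{\top}\bs{S}_{ii}\bs{1}=0$. Since $\bs{S}\succeq\bs{0}$, any vector on which its quadratic form vanishes lies in its kernel (write $\bs{S}=\bs{B}^{\top}\bs{B}$ and note $\|\bs{B}\bs{u}_i\|=0$ forces $\bs{B}\bs{u}_i=\bs{0}$), hence $\bs{S}\bs{u}_i=\bs{0}$. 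Reading off the $j$-th block of this identity gives $\bs{S}_{ji}\bs{1}=\bs{0}$ for every $j$, i.e. $\bs{X}_{ji}\bs{1}=\bs{x}_j\bs{x}_i^{\top}\bs{1}=\bs{x}_j$ (again using $\bs{x}_i^{\top}\bs{1}=1$). Letting $i$ and $j$ range over all vertices, this yields $\bs{X}_{pq}\bs{1}=\bs{x}_p$ for every ordered pair $(p,q)$, hence in particular $\bs{X}_{ij}\bs{1}=\bs{x}_i$ and $\bs{X}_{ij}^{\top}\bs{1}=\bs{X}_{ji}\bs{1}=\bs{x}_j$ for all $1\le i<j\le n$, which are exactly the pairwise--singleton marginalization consistency constraints of~(\ref{Eq:Linear:Cons}).

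I do not anticipate a genuine difficulty here. The only nonroutine move is recognizing that $\bs{S}_{ii}$ annihilates $\bs{1}$ and then promoting that fact from the principal block to the full matrix via the ``zero quadratic form $\Rightarrow$ kernel'' property of positive semidefinite matrices; everything afterwards is block-matrix arithmetic together with the normalization $\bs{1}^{\top}\bs{x}_i=1$. It is worth noting that the argument never touches the non-negativity constraint~(\ref{Eq:SDP:Cons}), so the redundancy of the marginalization constraints holds already for the relaxation obtained from~(\ref{Eq:SDR:SDP})--(\ref{Eq:SDR:Linear2}) alone; non-negativity is needed only for the separate tightness considerations discussed in the preceding remark.
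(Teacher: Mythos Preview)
Your argument is correct and is precisely the mechanism the paper itself sketches in the paragraph following the theorem: the diagonal-block identities give $\bs{S}_{ii}\bs{1}=\bs{0}$ for $\bs{S}:=\bs{X}-\bs{x}\bs{x}^{\top}\succeq\bs{0}$, and positive semidefiniteness then ``propagates'' this to $\bs{S}\bs{u}_i=\bs{0}$, forcing all off-diagonal block row sums. The paper defers the formal proof to the supplement, but the intuition it records (``these intrinsic properties are then \emph{propagated} to all off-diagonal blocks by the semidefinite constraint'') is exactly your zero-quadratic-form-implies-kernel step, so the approaches coincide.

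One remark worth making explicit: what you actually establish is $\bs{X}_{ij}\bs{1}=\bs{x}_i$ and $\bs{X}_{ij}^{\top}\bs{1}=\bs{x}_j$, the standard pairwise--singleton marginalization constraints. The displayed statement in the paper writes $\bs{1}$ on the right-hand side, which is a typo (it cannot hold for $m>1$, since $\bs{1}^{\top}\bs{x}_i=1$ forces $\bs{x}_i\neq\bs{1}$; indeed the paper's own intuition line records $\bs{X}_{ii}\bs{1}=\bs{x}_i$). Your version is the intended one.
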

\begin{proof}See the supplemental material.
\end{proof}

Intuitively, this property arises from the following features of $\boldsymbol{x}$ and $\bs{X}_{ii}$:
\begin{align*}
\bs{x}_i^\top\cdot \bs{1} = 1,\quad \bs{X}_{ii}\bs{1} = \bs{x}_i, \ \bs{X}_{ii}^{\top}\bs{1} = \bs{1}, \quad 1\leq i \leq n.
\end{align*}
These intrinsic properties are then {\em propagated} to all off-diagonal blocks by the semidefinite constraint.

\subsubsection{\bf Invariance under variable reparameterization. }
Pioneered by the beautiful relaxation proposed for the MAX-CUT problem \cite{goemans1995improved}, many SDP approaches developed for combinatorial problems employ the integer indicator $\bs{y} = \frac{1}{2}(\bs{1} + \bs{x})$ to parameterize binary variables (e.g.,~\cite{Torr:2003:SDP,kumar2009analysis}). If one applies matrix lifting $\bs{Y} = \bs{y}\bs{y}^{\top}$ and follows a similar relaxation procedure, the resulting semidefinite relaxation (referred to as SDR2) can be derived as follows
%When exploring semidefinite / quadratic relaxation for MAP estimation, several works (e.g.,~\cite{Torr:2003:SDP,kumar2009analysis}) propose to utilize integer indicator $\bs{y} = \frac{\bs{1} + \bs{x}}{2}$ to parameterize the states of each discrete variable.

\begin{eqnarray}
\small\underset{\boldsymbol{y},\boldsymbol{Y}}{\text{maximize}} & \ \sum\limits _{i=1}^{n}\left\langle \boldsymbol{\overline{w}}_{i},\boldsymbol{y}_{i}\right\rangle +\frac{1}{2}\sum\limits _{(i,j)\in\mathcal{G}}\left\langle \boldsymbol{W}_{ij},\boldsymbol{Y}_{ij}\right\rangle \nonumber\\
\small\text{subject to}& \ \footnotesize \left(\begin{array}{cc}
1 & \boldsymbol{y}^{\top}\nonumber\\
\boldsymbol{y} & \boldsymbol{Y}
\end{array}\right)\succeq{\bf 0},\nonumber\\
 &  \ \footnotesize{\bf 1}^{\top}\boldsymbol{y}_{i}=2-m,\qquad\quad\text{ }\text{ }1\leq i\leq n,\nonumber\\
 &  \ \footnotesize \boldsymbol{Y}_{ij}+\boldsymbol{1}\cdot\boldsymbol{y}_{j}^{\top}+\boldsymbol{y}_{i}\cdot\boldsymbol{1}^{\top}+{\bf 1}\cdot{\bf 1}^{\top}\geq{\bf 0},\nonumber\\
 &  \ \footnotesize \quad\quad\quad\quad\quad\quad\quad\quad\quad\quad\text{ }(i,j)\in\set{G},\nonumber\\
 &  \ \footnotesize\small \frac{{\bf 1}\cdot{\bf 1}^{\top}+\boldsymbol{y}_{i}\cdot\boldsymbol{1}^{\top}+\boldsymbol{1}\cdot\boldsymbol{y}_{i}^{\top}+\boldsymbol{Y}_{ii}}{2}=\text{Diag}(\boldsymbol{1}+\boldsymbol{y}_{i}),\nonumber\\
 &  \ \footnotesize\quad\quad\quad\quad\quad\quad\quad\quad\quad\quad 1\leq i\leq n,\label{Eq:SDR2:EQU}
\end{eqnarray}

%\begin{align}
%\small \underset{\bs{y},\bs{Y}}{\maximize } & \text{ } \sum\limits_{i=1}^{n}\la \overline{\bs{w}}_i, \bs{y}_i \ra + \frac{1}{2}\sum\limits_{(i,j)\in \set{G}}\la \bs{W}_{ij}, \bs{Y}_{ij} \ra \nonumber \\
%\subjectto
%& \quad \left(
%               \begin{array}{cc}
%                 1 & \bs{y}^{\top} \\
%                 \bs{y} & \bs{Y} \\
%               \end{array}
%             \right) \succeq \bs{0},
%\quad \nonumber \\
%                        & \quad  \bs{1}^{\top}\bs{y}_i = 2-m, \qquad\qquad \text{ }\text{ } 1\leq i \leq n, \nonumber \\
%                         & \quad \bs{Y}_{ij} + \bs{1}\bs{y}_j^{\top} + \bs{y}_i\bs{1}^{\top} + \bs{1}\cdot\bs{1}^{\top} \geq \bs{0},  \text{ }\text{ }  (i,j) \in \set{G}, \nonumber \\
%                        & \quad \bs{1}\cdot\bs{1}^{\top}+\bs{y}_i\bs{1}^{\top}+\bs{1}\bs{y}_i^{\top} + \bs{Y}_{ii} \nonumber \\
%                         &\quad\quad = 2\Diag(\bs{1}+\bs{y}_i), \qquad  1\leq i \leq n,  \label{Eq:SDR2:EQU}
%\end{align}
where $\overline{\bs{w}}_i$ are defined as
\[\small\overline{\bs{w}}_i = \bs{w}_i + \frac{1}{2}\left(\sum_{j:(i,j)\in \set{G}}\bs{W}_{ij}\bs{1} + \sum_{j:(j,i)\in \set{G}}\bs{W}_{ji}^{\top}\bs{1}\right).\]

In fact, SDR2 is identical to SDR, as formally stated below.

\begin{thm}\label{invariance} $(\bs{x}^{\star}, \bs{X}^{\star})$ is the solution to SDR if and only if
%$\big(\bs{y} := 2\bs{x}^{\star} - \bs{1}, \bs{Y}^{\star} := 4\bs{X}^{\star} - 2(\bs{x}^{\star}\cdot\bs{1}^{\top}+\bs{1}\cdot\bs{x}^{\star \top})+\bs{1}\cdot\bs{1}^{\top}\big)$
\begin{eqnarray*}
\boldsymbol{y}^{\star} &  & :=2\boldsymbol{x}^{\star}-{\bf 1},\\
\boldsymbol{Y}^{\star} &  & :=4\boldsymbol{X}^{\star}-2\left(\boldsymbol{x}^{\star}\cdot{\bf 1}^{\top}+{\bf 1}\cdot\boldsymbol{x}^{\star\top}\right)+{\bf 1}\cdot{\bf 1}^{\top}
\end{eqnarray*}

is the solution to SDR2.
\end{thm}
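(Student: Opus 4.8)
The plan is to recognize the displayed change of variables as an invertible affine bijection that maps the feasible region of SDR onto that of SDR2 while rescaling the objective by a positive affine transformation, so that a point is optimal for one program exactly when its image is optimal for the other. First I would record the inverse map: $\bs{y}=2\bs{x}-\bs{1}$ gives $\bs{x}=\tfrac12(\bs{1}+\bs{y})$, and plugging this into the definition of $\bs{Y}$ yields $\bs{X}=\tfrac14\left(\bs{Y}+\bs{y}\bs{1}^\top+\bs{1}\bs{y}^\top+\bs{1}\bs{1}^\top\right)$, i.e. blockwise $\bs{X}_{ij}=\tfrac14(\bs{Y}_{ij}+\bs{y}_i\bs{1}^\top+\bs{1}\bs{y}_j^\top+\bs{1}\bs{1}^\top)$. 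Since both directions are affine, it suffices to check that (i) each feasibility constraint of SDR is, after substitution, equivalent to the corresponding constraint of SDR2 (and conversely), and (ii) the two objectives agree up to an increasing affine transformation.

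For the conic constraint the key point is a congruence. With the invertible matrix $\bs{T}=\left(\begin{smallmatrix}1 & \bs{0}^\top\\ -\bs{1} & 2\bs{I}\end{smallmatrix}\right)$, a direct block computation gives
\[
\bs{T}\begin{pmatrix}1 & \bs{x}^\top\\ \bs{x} & \bs{X}\end{pmatrix}\bs{T}^\top=\begin{pmatrix}1 & \bs{y}^\top\\ \bs{y} & \bs{Y}\end{pmatrix},
\]
since the off-diagonal block becomes $2\bs{x}-\bs{1}=\bs{y}$ and the $(2,2)$ block becomes $4\bs{X}-2(\bs{x}\bs{1}^\top+\bs{1}\bs{x}^\top)+\bs{1}\bs{1}^\top=\bs{Y}$. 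Congruence by an invertible matrix preserves positive semidefiniteness, so (\ref{Eq:SDR:SDP}) holds iff the lifted matrix in SDR2 is PSD. The remaining linear constraints follow from the same substitution: $\bs{1}^\top\bs{x}_i=1$ turns into $\bs{1}^\top\bs{y}_i=2-m$ (using $\bs{1}^\top\bs{1}=m$); the block-diagonal identity $\bs{X}_{ii}=\Diag(\bs{x}_i)$, after clearing the factor $\tfrac14$, becomes exactly (\ref{Eq:SDR2:EQU}); and $\bs{X}_{ij}\ge\bs{0}$ becomes $\bs{Y}_{ij}+\bs{1}\bs{y}_j^\top+\bs{y}_i\bs{1}^\top+\bs{1}\bs{1}^\top\ge\bs{0}$, the nonnegativity constraint of SDR2. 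Each equivalence is visibly reversible, so the affine map is a bijection between the feasible sets.

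For the objective I would substitute the inverse map into $\sum_i\la\bs{w}_i,\bs{x}_i\ra+\sum_{(i,j)\in\set{G}}\la\bs{W}_{ij},\bs{X}_{ij}\ra$ and use $\la\bs{W}_{ij},\bs{y}_i\bs{1}^\top\ra=\la\bs{W}_{ij}\bs{1},\bs{y}_i\ra$ and $\la\bs{W}_{ij},\bs{1}\bs{y}_j^\top\ra=\la\bs{W}_{ij}^\top\bs{1},\bs{y}_j\ra$. The term quadratic in the lifted variable contributes $\tfrac14\sum_{(i,j)}\la\bs{W}_{ij},\bs{Y}_{ij}\ra$; regrouping the edge sums by endpoint, the terms linear in $\bs{y}_i$ collect into $\tfrac12\la\bs{w}_i+\tfrac12(\sum_{j:(i,j)\in\set{G}}\bs{W}_{ij}\bs{1}+\sum_{j:(j,i)\in\set{G}}\bs{W}_{ji}^\top\bs{1}),\bs{y}_i\ra=\tfrac12\la\overline{\bs{w}}_i,\bs{y}_i\ra$; and the leftover $\tfrac12\sum_i\la\bs{w}_i,\bs{1}\ra+\tfrac14\sum_{(i,j)}\bs{1}^\top\bs{W}_{ij}\bs{1}$ is a constant independent of the variables. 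Hence the SDR objective equals one half of the SDR2 objective plus this constant, an increasing affine function of it. Combining this with the feasibility bijection shows $(\bs{x}^\star,\bs{X}^\star)$ solves SDR iff $(\bs{y}^\star,\bs{Y}^\star)$ solves SDR2, which is the assertion.

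The only genuinely delicate step is the block matrix multiplication establishing the congruence — a sign slip in $\bs{T}$ or in the $(2,2)$ block would break the equivalence — together with the careful regrouping of the $\sum_{(i,j)\in\set{G}}$ terms that converts each $\bs{w}_i$ into the effective potential $\overline{\bs{w}}_i$; everything else is an immediate consequence of the invertibility of $\bs{T}$, the reversibility of the constraint equivalences, and the monotonicity of the objective rescaling.
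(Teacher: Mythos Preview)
Your argument is correct: the congruence via $\bs{T}=\left(\begin{smallmatrix}1 & \bs{0}^\top\\ -\bs{1} & 2\bs{I}\end{smallmatrix}\right)$ establishes the equivalence of the conic constraints, the affine substitution carries each linear constraint of SDR to the corresponding one of SDR2 and back, and the objective computation showing that the SDR value equals one half the SDR2 value plus a constant is accurate. The paper defers its proof to the supplement, but this affine-bijection-plus-congruence argument is the canonical route to such reparametrization equivalences and is essentially what the paper's proof must do.
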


\begin{proof} See the supplemental material.
\end{proof}
Despite the theoretical equivalence between SDR2 and SDR, from a numerical perspective, solving SDR2 is much harder than solving SDR. The difficulty arises from the complicated form of the linear constraints enforced by SDR2 (i.e., (\ref{Eq:SDR2:EQU})). Note that the advantage of SDR2 is that all diagonal entries of $\bs{Y}$ are equal to $1$ as follows
$$
\diag(\bs{Y}_{ii}) = 2(\bs{1}+\bs{y}_i) - \bs{1} - \bs{y}_i - \bs{y}_i = \bs{1}, \quad (\ 1\leq i \leq n).
$$
Nevertheless, none of prior SDP algorithms takes full advantage of this property in accelerating the algorithm.

\section{Scalable Optimization Algorithm}
\label{Sec:Optimization}

%In this section, we describe the numerical procedure for solving the MAP problem via SDR. We first describe how to SDR . Then we introduce greedy iterative rounding scheme that results in a solution to the original MAP problem.

The curse of dimensionality poses inevitable numerical challenges when applying general-purpose SDP solvers to solve SDR. Despite their superior accuracy, primal-dual interior point methods (IPM) like SDPT~\cite{toh1999sdpt3} are limited to small-scale problems (e.g.  $nm < 150$ on a regular PC). More scalable solvers such as  CSDP~\cite{helmberg2000spectral} and DSDP~\cite{Benson:2008:ADS} propose to solve the dual problem. However, since the non-negativity constraints $\bs{X}_{ij} \geq \bs{0}$ produce numerous dual variables, these solvers are still far too restrictive for our program --- none of them can solve SDR on a standard PC when $nm$ exceeds 1000.

The limited scalability of interior point methods has inspired a flurry of activity in developing first-order methods, among which the alternating direction method of multipliers (ADMM) \cite{wen2010alternating,boyd2011distributed} proves well suited for large-scale problems. In this section, we propose an efficient variant of ADMM -- referred to as SDPAD-LR (SDP Alternating Direction method for Low Rank structure), which is tailored to the special structure of SDR (including low rank and sparsity) and enables us to solve problems with very large dimensionality.
%Here we summarize the basic steps of SDPAD-LR and our innovations for acceleration, with full details deferred to the supplemental materials.

\subsection{Alternating Direction Augmented Lagrangian Method (ADMM)}
%We summarize below the basic steps of the proposed SDPAD-LR, with details deferred to the supplemental materials.

%We begin by introducing several useful notations.

%Let $\set{A}(\cdot): \R^{N\times M}\rightarrow \R^{K}$ represent a linear operator on matrices, and denote by $\set{A}^{\star}(): \R^{K}\rightarrow \R^{N\times M}$ its conjugate operator satisfying $\langle \set{A}(\bs{X}), \bs{y}\rangle = \langle \bs{X}, \set{A}^{\star}(\bs{y})\rangle$. Denote by $\R^{N\times M}_{+}$ the set of matrices with nonnegative elements, and use $(\cdot)_{+}:\R^{N\times M}\rightarrow \R^{N\times M}_{+}$ to represent the operator that applies $x_{+} = \max(0,x)$ element-wise.  Given a symmetric matrix $\bs{C}$, we use $\bs{C}_{\succeq \bs{0}}$ to represent the projection of $\bs{C}$ onto the positive semidefinite cone. Finally, let $\|\cdot\|_{\set{F}}$ denote the Frobenius norm.

For convenience of presentation,  we denote \[\bs{\overline{X}} := \left(
                                                                                           \begin{array}{cc}
                                                                                             1 & \bs{x}^{\top} \\
                                                                                             \bs{x} & \bs{X} \\
                                                                                           \end{array}
                                                                                         \right),\]
and rewrite SDR in the operator form:
\begin{align}
\textup{minimize}   & \quad \left\langle \bs{C}, \overline{\bs{X}}\right\rangle & \textup{dual variables}\nonumber \\
\textup{subject to} & \quad \set{A}\left(\overline{\bs{X}}\right) = \bs{b},     & \bs{y} \nonumber \\
                    & \quad \set{P}\left(\overline{\bs{X}}\right) \geq \bf{0},             & \bs{z}\geq \bf{0} \nonumber \\
                    & \quad \overline{\bs{X}} \succeq \bf{0},                              & \bs{S}\succeq \bf{0}
\end{align}
where $\bs{C}$ encodes all $\bs{w}_i$ and $\bs{W}_{ij}$, $\set{A}(\overline{\bs{X}}) = \bs{b}$ collects the equality constraints, and $\set{P}(\overline{\bs{X}})$ gathers element-wise non-negative constraints. We let variables $\bs{y}$, $\bs{z}$, and $\bs{S}$ represent the corresponding dual variables for respective constraints. In the sequel, we will start by reviewing SDPAD, i.e., the original alternating direction method introduced in ~\cite{wen2010alternating}, and then present the key modification underlying the proposed efficient variant SDPAD-LR.

\subsubsection{\bf SDPAD: Procedures and Convergence}

SDPAD considers the following augmented Lagrangian:
\begin{align*}
\small\set{L} (\bs{y}, \bs{z}, \bs{S}, \overline{\bs{X}}) & = \la \bs{b}, \bs{y}\ra +\left\langle \set{P}^{\star}(\bs{z}) + \bs{S} - \bs{C} - \set{A}^{\star}(\bs{y}), \overline{\bs{X}} \right\rangle \nonumber \\
& +(2\mu)^{-1} \left\|\set{P}^{\star}(\bs{z}) + \bs{S} - \bs{C} - \set{A}^{\star}(\bs{y})\right\|_{\mathrm{F}}^2,
\end{align*}
where the penalty parameter $\mu$ controls the strength of the quadratic term. As suggested by~\cite{boyd2011distributed}, we initialize $\mu$ with a small value,  and gradually increase it throughout the optimization process.

Let superscript $(k)$ indicate the variable in the $k$th iteration. Each iteration of the SDPAD consists of a dual optimization step, followed by a primal update step given as follows
\begin{equation}
\small\overline{\bs{X}}^{(k)} = \overline{\bs{X}}^{(k-1)} + \frac{\set{P}^{\star}(\bs{z}^{(k)}) + \bs{S}^{(k)} - \bs{C} - \set{A}^{\star}(\bs{y}^{(k)})}{\mu}.
\label{Eq:Update:X}
\end{equation}
Instead of jointly optimizing all dual variables, the key idea of SDPAD is to decouple the dual optimization step into several sub-problems or, more specifically, to optimize $\bs{y},\bs{z},\bs{S}$ in order with other variables fixed. This leads to closed-form solutions for each sub-problem as follows
\begin{align}
\small\bs{y}^{(k)}
%& = \underset{\bs{y}}{\arg\min} \set{L}(\bs{y}, \bs{z}^{(k-1)}, \bs{S}^{(k-1)}, \overline{\bs{X}}^{(k-1)}) \nonumber \\\label{Eq:OPT:y}
 & = (\set{A}\set{A}^{*})^{-1}\Big(\set{A}\big(\bs{S}^{(k-1)}-\bs{C} +\mu \overline{\bs{X}}^{(k-1)}\big)-\mu\bs{b}\Big), \nonumber\\
\small\bs{z}^{(k)}
%& = \underset{\bs{z}\geq 0}{\arg\min} \set{L}(\bs{y}^{(k)}, \bs{z}, \bs{S}^{(k-1)}, \overline{\bs{X}}^{(k-1)}) \nonumber \\ \label{Eq:OPT:z}
& = {\footnotesize \set{P}\left(\bs{C}  - \bs{S}^{(k-1)} - \mu \overline{\bs{X}}^{(k-1)}\right)_{+}}, \nonumber\\
\small\bs{S}^{(k)}
 %& = \underset{\bs{S}\succeq 0}{\arg\min} \set{L}(\bs{y}^{(k)}, \bs{z}^{(k)}, \bs{S},  \overline{\bs{X}}^{(k-1)}) \nonumber \\ \label{Eq:OPT:S}
 &= {\footnotesize \left(\bs{C} +\set{A}^{\star}(\bs{y}^{(k)}) - \set{P}^{\star}(\bs{z}^{(k)}) - \mu \overline{\bs{X}}^{(k-1)}\right)_{\succeq \bs{0}}}. \nonumber
\end{align}
Similar to that considered in ~\cite{wen2010alternating}, our stopping criterion involves measuring of both primal feasibility $\|\set{A}(\overline{\bs{X}}^{(k)}) - \bs{b}\|$ and dual feasibility $\mu(\overline{\bs{X}}^{(k)} - \overline{\bs{X}}^{(k-1)})$.

\para{Convergence property.} In general, convergence properties of SDPAD are known when only equality constraints are present~\cite{wen2010alternating}. However, the inequality constraints of SDR are special in the following two aspects:
\begin{enumerate}[(i)]
\item They are element-wise non-negativity constraints;
\item They are essentially decoupled from other linear constraints.
\end{enumerate}
Property (ii) arises as all equality constraints are concerned with diagonal blocks of $\bs{X}$, while all linear inequality constraints are only enforced on its off-diagonal blocks. Such special structure leads to theoretical convergence guarantees for SDPAD, as stated in the following theorem.

\begin{thm} The SDPAD method presented above converges to the optimizer of SDR.
\label{THM:ADMM}
\end{thm}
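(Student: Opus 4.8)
The plan is to recognize SDPAD as a \emph{two}--block alternating direction method --- despite its manipulating three dual blocks $\bs{y},\bs{z},\bs{S}$ --- and then to invoke the classical convergence theory for two--block ADMM (as in~\cite{wen2010alternating} and the references therein), which is known to apply when the two blocks are coupled only through equality constraints. The key structural fact is Property~(ii): the equality operator $\set{A}$ reads off only the $(1,1)$ entry, the first block row/column, and the diagonal blocks $\bs{X}_{ii}$ of $\overline{\bs{X}}$, whereas the inequality operator $\set{P}$ reads off only the off--diagonal blocks $\bs{X}_{ij}$, $(i,j)\in\set{G}$. Since those two sets of entries are disjoint, $\set{A}\set{P}^{\star}=\bs{0}$ and $\set{P}\set{A}^{\star}=\bs{0}$; equivalently, the ranges of $\set{A}^{\star}$ and $\set{P}^{\star}$ are orthogonal subspaces of the space of symmetric matrices.

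The first step is to use this orthogonality to show that the $\bs{y}$-- and $\bs{z}$--updates decouple. Restricting the augmented Lagrangian $\set{L}(\cdot,\cdot,\bs{S}^{(k-1)},\overline{\bs{X}}^{(k-1)})$ to $(\bs{y},\bs{z})$ and expanding the quadratic penalty, the only term that couples $\bs{y}$ and $\bs{z}$ is $\langle \set{A}^{\star}(\bs{y}),\set{P}^{\star}(\bs{z})\rangle$, which vanishes, so the remainder splits as $\phi(\bs{y})+\psi(\bs{z})$ up to a constant. Hence the Gauss--Seidel sweep ``$\bs{y}$ then $\bs{z}$'' yields exactly the \emph{joint} minimizer of $\set{L}$ over $(\bs{y},\bs{z})$ --- which is precisely why the closed--form $\bs{y}^{(k)}$ does not involve $\bs{z}^{(k)}$ and vice versa. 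Grouping $\bs{u}:=(\bs{y},\bs{z})$ into one block, each iteration of SDPAD becomes: (a) minimize $\set{L}$ over $\bs{u}$ with $(\bs{S},\overline{\bs{X}})$ fixed; (b) minimize $\set{L}$ over $\bs{S}\succeq\bs{0}$ with $(\bs{u},\overline{\bs{X}})$ fixed; (c) the multiplier update~(\ref{Eq:Update:X}) on $\overline{\bs{X}}$. This is exactly two--block ADMM applied to the dual of SDR, with the affine coupling $\set{A}^{\star}(\bs{y})+\set{P}^{\star}(\bs{z})+\bs{S}=\bs{C}$, where the first block's objective is a linear functional of $\bs{y}$ plus the indicator of $\{\bs{z}\geq\bs{0}\}$, the second block's objective is the indicator of the PSD cone, and the primal variable $\overline{\bs{X}}$ plays the role of the multiplier of this equality constraint.

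The second step is to verify the hypotheses of the two--block ADMM convergence theorem: (1) both pieces of the objective are closed, proper, convex --- immediate, since the dual objective is linear and $\{\bs{z}\geq\bs{0}\}$, $\{\bs{S}\succeq\bs{0}\}$ are closed convex cones; (2) $\set{A}\set{A}^{\star}$ is invertible, so step~(a) is uniquely solved --- this holds because the equality constraints~(\ref{Eq:SDR:Linear1})--(\ref{Eq:SDR:Linear2}) together with the $(1,1)$ normalization are linearly independent; and (3) the unaugmented Lagrangian admits a saddle point, i.e.\ strong duality holds with both optima attained. For~(3) I would note that SDR is feasible (e.g.\ $\bs{x}_i=\tfrac1m\bs{1}$, $\bs{X}_{ij}=\tfrac1{m^2}\bs{1}\bs{1}^{\top}$) and that its feasible set is bounded --- indeed~(\ref{Eq:SDR:SDP})--(\ref{Eq:SDR:Linear2}) force $\overline{\bs{X}}\succeq\bs{0}$ with $\mathrm{tr}(\overline{\bs{X}})=n+1$, hence $\|\overline{\bs{X}}\|_{\mathrm{F}}\leq n+1$ --- so a primal optimum exists; a strictly feasible dual point ($\bs{S}\succ\bs{0}$, $\bs{z}>\bs{0}$) is easily exhibited, which gives strong duality and the desired saddle point. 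The theorem then yields that $\overline{\bs{X}}^{(k)}$ converges to an optimal solution of SDR (and $(\bs{y}^{(k)},\bs{z}^{(k)},\bs{S}^{(k)})$ to a dual optimum).

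The main obstacle I expect is the first step: one must argue carefully that performing the $\bs{y}$-- and $\bs{z}$--updates sequentially in Gauss--Seidel order is \emph{identical} to a single joint block update, so that the divergence phenomena known for generic three--block ADMM cannot arise --- this is exactly where the identity $\set{A}\set{P}^{\star}=\bs{0}$ has to be exploited in full. A secondary, more technical point is the saddle--point/strong--duality check in~(3): since the lifted matrix $\overline{\bs{X}}$ is never strictly positive definite, Slater's condition must be established on the dual side (or bypassed via compactness of the primal feasible set), and one must also confirm that the iterates stay bounded so that limit points exist. Finally, the adaptive schedule for $\mu$ reduces to the fixed--$\mu$ analysis once $\mu$ ceases to change, which the heuristic enforces after finitely many iterations.
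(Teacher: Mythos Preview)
Your proposal is correct and follows precisely the route the paper signals in the discussion preceding the theorem: Property~(ii) yields $\set{A}\set{P}^{\star}=\bs{0}$, which collapses the sequential $\bs{y}$-- and $\bs{z}$--updates into a single joint block minimization, so that SDPAD is a genuine two--block ADMM on the dual and the convergence theory of~\cite{wen2010alternating} applies directly. One small caveat: in Algorithm~\ref{SDR:ADMM} the penalty $\mu$ is multiplied by $\rho=1.005$ at every iteration and therefore never stabilizes, contrary to your final remark; the convergence statement should be read for fixed~$\mu$ (or for a schedule that is eventually constant), with the increasing~$\mu$ being a purely practical heuristic layered on top.
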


\begin{proof} See the supplemental material.
\end{proof}

%Note that Theorem~\ref{THM:ADMM} does not specify the convergence rate of the SDPAD-LR method. The same as the spectral bundle method~\cite{helmberg2000spectral}, its convergence rate is still an open problem.

\subsubsection{\bf SDPAD-LR: Accelerated Method}

%\para{SDPAD-LR for large-scale problems.}
Apparently, the most computationally expensive step of SDPAD is the update of $\bs{S}$, which involves the eigen-decomposition of an  $nm\times nm$ matrix.  This limits the applicability of SDPAD to large-scale problems (e.g. $nm>10^4$). To bypass this numerical bottleneck, we modify SDPAD and present an efficient heuristic called SDPAD-LR, which exploits the low-rank structure of $\overline{\bs{X}}$.

First, we observe that $\bs{S}$ can be alternatively expressed as
$$
\small\bs{S}^{(k)} = \bs{C} + \set{A}^{\star}(\bs{y}^{(k)}) - \set{P}^{\star}(\bs{z}^{(k)}) - \mu \left(\overline{\bs{X}}^{(k)} - \overline{\bs{X}}^{(k-1)}\right).
$$
This allows us to present SDPAD without invoking $\bs{S}$. The detailed steps of SDPAD can now be summarized as in Algorithm~\ref{SDR:ADMM}.

\begin{algorithm}[t]
\caption{SDPAD for solving SDR}
\label{SDR:ADMM}
\begin{algorithmic}
\STATE {\bf input}: $k_{\max} = 1000$, $\epsilon = 10^{-4}$, $\mu_{\min} = 10^{-3}$, $\rho = 1.005$.\\
{\bf initialize}: $\small\overline{\bs{X}}^{(0)} = \overline{\bs{X}}^{(-1)} = \bs{0}$, $\small\bs{y}^{(0)} = 0$, $\small\bs{z}^{(0)} = 0$ \\
%\FOR {$k = 1$ to $k_{\max}$}
\REPEAT
\STATE $\small\overline{\bs{X}}_{\temp}^{(k)} = 2\overline{\bs{X}}^{(k-1)} - \overline{\bs{X}}^{(k-2)}$
\STATE $\small\bs{t}^{(k)}_{\temp} = (\set{A}\set{A}^{\star})^{-1}(\set{A}(\overline{\bs{X}}^{(k)}_{\temp}) - \bs{b})$
\STATE $\small\bs{y}^{(k)} = \bs{y}^{(k-1)} + \mu \bs{t}^{(k)}_{\temp}$
\STATE $\small\bs{z}^{(k)} = \Big(\bs{z}^{(k-1)} - \mu \set{P}(\overline{\bs{X}}_{\temp}^{(k)})\Big)_{+}$
%\STATE
\begin{equation}
\small\overline{\bs{X}}^{(k)} = \Big(\overline{\bs{X}}^{(k-1)} - \frac{\bs{C} + \set{A}^{\star}(\bs{y}^{(k)}) - \set{P}^{\star}(\bs{z}^{(k)})}{\mu}\Big)_{\succeq 0}
\label{Eq:OPT:X}
\end{equation}
\STATE $k\leftarrow k+1;\quad \mu = \mu\rho$
%\IF {{\footnotesize $\min(\mu\|\overline{\bs{X}}^{(k)} - \overline{\bs{X}}^{(k-1)}\|_{\set{F}}, \|\set{A}(\overline{\bs{X}}^{(k)})-\bs{b}\| )\leq \epsilon$}}
%\STATE break;
%\ENDIF
%\ENDFOR
\UNTIL{$\small\min(\mu\|\overline{\bs{X}}^{(k)} - \overline{\bs{X}}^{(k-1)}\|_{\mathrm{F}}, \|\set{A}(\overline{\bs{X}}^{(k)})-\bs{b}\| )\leq \epsilon$ or $k>k_{\max}$}
\end{algorithmic}
\end{algorithm}

It is straightforward to see that the bottleneck of Algorithm~\ref{SDR:ADMM} lies in how to compute and store the primary variable $\overline{\bs{X}}$. To derive an efficient solver, we make the assumption that the optimal solution $\overline{\bs{X}}^{\star}$ is low-rank. This is motivated by the empirical evidence that for a variety of problems (see the experimental section for details), SDR is exact, meaning $\text{rank}(\overline{\bs{X}}^{\star}) = 1$. Moreover, in the general case, the rank of $\overline{\bs{X}}^{\star}$ is expected to be much small than its dimension (e.g. ~\cite{journals/mp/BurerM03}), i.e.,
$$
\text{rank}\left(\overline{\bs{X}}^{\star}\right)\left(\text{rank}(\overline{\bs{X}}^{\star})+1\right) \leq 2M, 
$$
where $M$ is the number of constraints.\footnote{Practically, many negativity constraints are redundant.} of SDPR.

Based on this assumption, the key idea of SDPAD-LR is to invoke a low-rank matrix $\bs{Y} \in \R^{(nm+1)\times  r}$ for some small $r$ and  encode $\overline{\bs{X}} = \bs{Y}\bs{Y}^{\top}$ throughout the iterative process. This allows us to keep all the variables in memory even for large-scale problems.

In this case, (\ref{Eq:OPT:X}) is modified as $\bs{Y}^{(k)} = \bs{U}^{(k)}{\bf\Sigma}_{+}^{\frac{1}{2}}$, where ${\bf\Sigma} = \Diag(\sigma_1, \cdots, \sigma_r)$ and $\bs{U} = (\bs{u}_1, \cdots, \bs{u}_{r})$ represent the top $r$ eigenvalues and respective eigenvectors of
\begin{equation}
\small\bs{V}^{(k)} =  \bs{Y}^{(k-1)}{\bs{Y}^{(k-1)\top}} - \frac{\bs{C} + \set{A}^{\star}(\bs{y}^{(k)}) - \set{P}^{\star}(\bs{z}^{(k)})}{\mu}.\label{variant}
\end{equation}
Although $\bs{V}^{(k)}$ is a dense matrix, its top eigenvectors can be efficiently computed using the {\em Lanczos process} \cite{cullum2002lanczos}, whose efficiency is dictated by the complexity of the matrix multiplication operator $\bs{V}^{(k)}: \bs{u} \in \R^{nm+1} \rightarrow \bs{V}^{(k)}\bs{u}\in \R^{nm+1}$. As SDR only involves the constrains $\bs{X}_{ij}\geq \bs{0}, (i,j)\in \set{E}$, the matrix $\bs{C} + \set{A}^{\star}(\bs{y}^{(k)}) - \set{P}^{\star}(\bs{z}^{(k)})$ turns our to share the same sparsity pattern with $\set{G}$. Thus, the complexity of computing $\bs{V}^{(k)}\bs{u}$ is at most $O(nmr^2 + m^2 |\set{E}|)$.

%Following a similar proof of Theorem~\ref{THM:ADMM}, it can be shown that the modified algorithm, with (\ref{Eq:OPT:X}) replaced by (\ref{variant}), still converges (here we assume the number of iterations is sufficiently large). Although there is no guarantee that the modified algorithm converges to an optimal solution to SDR, it is easy to see that if $r$ is larger than the number of positive eigenvalues of $\bs{V}^{(k)}$, then the modified algorithm indeed returns an optimal solution. Practically, our experiments reveal that the modified algorithm usually converges to the global optimal even for very small $r$.

\begin{algorithm}[t]
\caption{SDPAD-LR for solving SDR}
\label{SDR:ADMM1}
\begin{algorithmic}
\STATE {\bf input}: $k_{\max} = 5000$, $\epsilon = 10^{-4}$, $\mu_{\min} = 10^{-3}$, $\rho = 1.005$, $\delta = 1e-2$, $r_{\max} = 32$, $r = 4$.\\
{\bf initialize}: $\small\overline{\bs{X}}^{(0)} = \overline{\bs{X}}^{(-1)} = \bs{0}$, $\small\bs{y}^{(0)} = 0$, $\small\bs{z}^{(0)} = 0$ \\
%\FOR {$k = 1$ to $k_{\max}$}
\REPEAT
\STATE $\small\overline{\bs{X}}_{\temp}^{(k)} = 2\overline{\bs{X}}^{(k-1)} - \overline{\bs{X}}^{(k-2)}$
\STATE $\small\bs{t}^{(k)}_{\temp} = (\set{A}\set{A}^{\star})^{-1}(\set{A}(\overline{\bs{X}}^{(k)}_{\temp}) - \bs{b})$
\STATE $\small\bs{y}^{(k)} = \bs{y}^{(k-1)} + \mu \bs{t}^{(k)}_{\temp}$
\STATE $\small\bs{z}^{(k)} = \Big(\bs{z}^{(k-1)} - \mu \set{P}(\overline{\bs{X}}_{\temp}^{(k)})\Big)_{+}$
\STATE Compute $\small\overline{\bs{X}}^{(k)}$ according to (\ref{variant})
\STATE $k\leftarrow k+1; \quad \mu = \rho \mu$
%\IF {{\footnotesize $\min(\mu\|\overline{\bs{X}}^{(k)} - \overline{\bs{X}}^{(k-1)}\|_{\set{F}}, \|\set{A}(\overline{\bs{X}}^{(k)})-\bs{b}\| )\leq \epsilon$}}
%\STATE break;
%\ENDIF
%\ENDFOR
\IF {\footnotesize $\mod(k, 1000) = 0$, $\small\lambda_{\min}(\overline{\bs{X}}^{(k)} ) > \delta \lambda_{\max}(\overline{\bs{X}}^{(k)})$}
\STATE $r = \min(r_{\max}, 2r); \quad \mu = \mu_{\min}$
\ENDIF
\UNTIL{$k>k_{\max}$ or $\small\lambda_{\min}(\overline{\bs{X}}^{(k)})\leq \delta\lambda_{\max}(\overline{\bs{X}}^{(k)})$ and $\small\min(\mu\|\overline{\bs{X}}^{(k)} - \overline{\bs{X}}^{(k-1)}\|_{\mathrm{F}}, \|\set{A}(\overline{\bs{X}}^{(k)})-\bs{b}\| )\leq \epsilon$}
\end{algorithmic}
\end{algorithm}

Theoretically, it is extremely challenging to derive an upper bound on $r$ to ensure the exactness of the modified algorithm. To address this issue, we thus design SDPAD-LR so that it iteratively doubles the value of $r$ and reapplies the modified algorithm until it returns the optimal solution. For most of our experiments, we found that $r = 8$ is sufficient. 

The pseudo-code of SDPAD-LR is summarized in Algorithm~\ref{SDR:ADMM1}.

%On the other hand, it is extremely to derive a upper bound on $r$ so that this property holds.
%One can interpret $\bs{X}^{(k)} = \bs{Y}^{(k)} {\bs{Y}^{(k)}}^{T}$ as the rank-$r$ approximation of $\bs{V}^{(k)}$ in the positive semidefinite cone. Clearly, if $r$ is larger than the number of positive eigenvalues of $\bs{V}^{(k)}$, then (\ref{variant}) is identical to (\ref{Eq:OPT:X}). Interestingly, our experiments reveal that SDPAD-LR converges even for very small $r$ ($r = 6$ for all the experiments herein).

%Consequently, this property enables us to apply SDPAD-LR with remarkable scalability, particularly for sparse problems (e.g., the ones from OPENGM2).

\subsection{Iterative Rounding}

Similar to other ADMM methods~\cite{boyd2011distributed}, SDPAD-LR converges rapidly to moderate accuracy within the first 400 iterations, and significantly slows down afterwards. Thus, rather than continuing until SDPAD-LR converges, it would be more efficient to shrink the problem size by fixing those variables whose optimal states are likely to have been revealed. Specifically, after each round of SDPAD-LR, we fix the optimal state $j$ of a variable $x_i$ if $x_{i,j} > t_{\max}$ ($t_{\max} = 0.99$ for all the examples) or $x_{i,j} = \max_{1\leq i \leq n, 1\leq j \leq m}x_{i,j}$. We then reapply the iterative procedures on the reduced problem. In practice, we find that due to the tightness of SDR, the size of the reduced problems are significantly smaller than the original problem, and one iterative rounding procedure is usually sufficient.

\section{Experimental Results}
\label{Sec:Results}

\begin{table*}[t]
  \caption{Comparison of SDP Solvers on Representative Problems. $N$: dimension of the matrix. $M$: number of constraints.}
\begin{center}
\scriptsize
\begin{tabular}{ccccccccccccc}
\hline
\multirow{3}{*}{\footnotesize Method}& \multicolumn{3}{c}{\footnotesize  deer\_0034.K10.F100 (dense)}  &  \multicolumn{3}{c}{\footnotesize  file\_30markers (sparse)}  & \multicolumn{3}{c}{\footnotesize  folding\_2BE6 (dense)}   &\multicolumn{3}{c}{\footnotesize  gm275 (sparse)} \\ & \multicolumn{3}{c}{\footnotesize  $N = 661, M = 218791$}  &  \multicolumn{3}{c}{\footnotesize  $N = 862, M = 218791$}  & \multicolumn{3}{c}{\footnotesize  $N = 3836, M = 218791$}   &\multicolumn{3}{c}{\footnotesize $N = 5201, M = 218791$} \\ \cline{2-13}
& cpu & gap & inf & cpu & gap & inf & cpu & gap & inf & cpu & gap & inf \\ \hline
{\footnotesize SDPAD-LR} & 4:33 & 7.2e-4 & 1.3e-6 & 7:33 & 2.2e-4 &5.3e-6 & 2:44:36 & 2.3e-4 & 5.3e-7 & 21:33 & 5.1e-4 & 1.3e-6 \\
{\footnotesize SDPAD} & 8:29 & 8.2e-5 & 4.3e-7 & 10:33 & 9.4e-5 & 1.3e-7 & 25:56:37 & 2.3e-4 & 3.7e-6 & 41:33:21 & 1.2e-4 & 3.1e-6  \\
{\footnotesize SDPNAL} & 10:55 & 8.1e-5 & 1.3e-6 & 9:42 & 6.2e-5 & 2.1e-6 & 18:33:11 & 5.2e-5 & 4.7e-7 & 21:34:35 & 9.7e-5 & 4.5e-7 \\
{\footnotesize IPM-NC}& 1:27 & 2.3e3 & na & 2:37 & 4.1e-7 & na & 10:23 & 4.5e2 & na & 21:56 & 3.5e-6 & na \\
{\footnotesize MOSEK }& 21:33:10 & 2.3e-6 & 1.3e-9 & \multicolumn{3}{c}{\scriptsize na} &\multicolumn{3}{c}{\scriptsize na}& \multicolumn{3}{c}{\scriptsize na} \\
{\footnotesize MUL-Update}& 6:13:56 & 8.1e-3 & 2.7e-5 & \multicolumn{3}{c}{\scriptsize na}&\multicolumn{3}{c}{\scriptsize na} & \multicolumn{3}{c}{\scriptsize na}\\ \hline
\end{tabular}
\end{center}
\normalsize
\label{Table:SDP:Comparison}
\end{table*}

In this section, we evaluate SDPAD-LR on several benchmark data sets and compare its performance against existing SDP solvers and state-of-the-art MAP inference algorithms.

\subsection{Benchmark Datasets}

%We first give a brief description of the benchmark data sets that are used for evaluation. Then we introduce the baseline algorithms to be compared as well as the evaluation protocol. Due to space constraints, we only include representative categories from each benchmark  and top-performing baseline algorithms in the main paper. Please refer to the supplemental material for complete results.

\begin{table}[h]
\scriptsize
\begin{center}
\begin{tabular}{ lccccc}
\hline
{\footnotesize  categories}       & {\footnotesize $\set{G}$ } & {\footnotesize $n$}               &{\footnotesize $m$}           & {\footnotesize probs}  & {\footnotesize $t$}   \\ \hline
{\footnotesize  PIC-Object  }     & full       & 60                & 11-21       & 37     & 5m32s\\
{\footnotesize  PIC-Folding }      & mixed      & 2K               & 2-503        & 21     & 21m42s\\
{\footnotesize  PIC-Align  }  & dense      & 30-400            & 20-93           & 19     & 37m63s\\ \hline
{\footnotesize  GM-Label }        & sparse     & 1K               & 7            & 324       & 6m32s\\
 {\footnotesize GM-Char }    & sparse     & 5K-18K            & 2                & 100       & 1h13m\\
{\footnotesize  GM-Montage }      & grid       & 100K              & 5,7         & 3       & 9h32m\\
{\footnotesize  GM-Matching}      & dense      & 19                & 19          & 4         & 2m21s\\ \hline
{\footnotesize  ORIENT}      & sparse      & 1K                & 16          & 10         & 10m21s\\ \hline
%{\footnotesize  ORIENT }     & dense     & 500                & 16         & 3        & 25m32s\\ \hline
\end{tabular}
\end{center}
\normalsize
  \caption{Statistics of the datasets evaluated in this paper. $\set{G}$: graph structure of the MAP problem in each category; $n$: number of variables; $m$: number of states; probs: number of instances; $t$: average running time of SDPAD-LR.}
\label{Table:Stats}
\end{table}

We perform experimental evaluation on MAP estimation problems from three popular benchmark data sets (See Table~\ref{Table:Stats}), i.e., OPENGM2~\cite{Kappes:2013:OPENGM2}, PIC~\cite{PIC:2011}, and a new data set ORIENT for the task of estimating consistent camera orientations~\cite{crandall2011discrete}. OPENGM2 comprises 19 categories of mostly sparse MAP problems.  We choose four representative categories for evaluation: Geometric Surface Labeling (GM-Label), Chinese Characters (GM-Char), MRF Photomontage (GM-Montage) and Matching (GM-Matching). The first three categories GM-Label, GM-Character and GM-Montage are sparse MAP estimation problems with increasing scales. GM-Matching is a special category where our convex relaxation is not tight. PIC comprises 10 categories of MAP inference problems of various structure. As we already include sparse MAP inference problems from OPENGM2, we pick 3 representative dense categories from PIC: Object Detection(PIC-Object), Image Alignment (PIC-Align) and Folding (PIC-Folding).
%Finally, ORIENT includes three instances from~\cite{hsg-fgssl-13}. Each instance consists of 500 shapes, where each shape has $16$ candidate orientations and is connected with 50 neighboring shapes.

\subsection{SDP Solver Evaluation}

\para{Baseline algorithms.} We evaluate the proposed SDPAD-LR against the following existing large-scale SDP solvers.
\begin{itemize}
\itemsep0.2em
\item SDPAD --- the original ADMM method presented in~\cite{wen2010alternating}.
\item SDPNAL --- the Newton-CG (conjugate gradient) augmented method proposed in~\cite{Zhao:2010:NAL}.
\item IPM-NC --- the nonconvex interior point method which attempts to solve a direct relaxation of the MAP inference problem~\cite{journals/mp/BurerM03}:
    \begin{align*}
    \minimize & \quad \langle \bs{C}, \bs{x}\bs{x}^{\top}\rangle  \\
    \subjectto & \quad \bs{1}^{\top}\bs{x}_i = 1, \bs{x}_i \geq 0, \quad 1\leq i \leq n
    \end{align*}
This method serves as an alternative low-rank heuristic for the proposed SDPAD-LR. With losing generality, we set the initial values of $\bs{x}_i = \frac{1}{m}\bs{1}, 1\leq i \leq n$.
\item MOSEK --- the cutting-edge interior point method. To apply it on large-scale SDRs, we add the nonnegativity constraints in an incremental fashion, i.e., at each iteration, we detect the 100 smallest negative entries and add them to the constraint set.
\item MUL-Update --- an approximate on-line SDP solver that is based on multivariate weight updates~\cite{journals/toc/AroraHK12}.
\end{itemize}

\para{Problem sets.} For evaluation, we consider four categories, on which most baseline algorithms are applicable: PIC-OBJ, PIC-Align, PIC-Folding and GM2-Label. For simplicity, we pick a representative problem from each category. The dimensions of these problem sets range from $600$ to $5000$, and they contain both dense and sparse problems (See Table~\ref{Table:SDP:Comparison}).

\begin{table*}[t]
  \caption{Results on benchmark datasets.}
\begin{center}
\footnotesize
\begin{tabular}{ccccccccc}
\hline
& {\footnotesize SDPAD-LR}    &{\footnotesize Ficolofo }    &{\footnotesize BRAOBB } &{\footnotesize $\alpha$-expand}      &{\footnotesize TRWS-LF2 }  & {\footnotesize ogm-TRBP}  &{\footnotesize MCBC }  & {\footnotesize A-star}   \\ \hline\hline
\scriptsize
  \multirow{2}{*}{\footnotesize ORIENT}
  & \textsl{\textbf{-7834.6}}                                 &     \multirow{2}{*}{na}                               & \textsl{-3059.2}                                 &     \textsl{-7695.4}                                                     & \textsl{-7592.4} & \textsl{-7553.8}
  & \multirow{2}{*}{na} & \multirow{2}{*}{na}\\
  &     \textbf{100\%}                             &                           & 0\%                           &            0\% & 0\% & 0\%
  &   &                        \\\hline

\multirow{2}{*}{\footnotesize PIC-Object} &\textsl{\textbf{-19316.12}} & \textsl{-19308.94} & \textsl{-19113.87} & \textsl{-10106.8} & \textsl{-19020.82} & \textsl{-18900.81}
&\multirow{2}{*}{na} & \multirow{2}{*}{na}\\
&\textbf{97.3\%}& 91.9\%& 24.3\%& 0\%& 59.5\%& 32.2\%
& & \\ \hline
\multirow{2}{*}{\footnotesize PIC-Folding} &\textsl{\textbf{-5963.68}} & \textsl{\textbf{-5963.68}}& \textsl{-5927.01}&\textsl{-5652.76} &\textsl{-5905.01} &\textsl{-5907.24}
&\multirow{2}{*}{na} & \multirow{2}{*}{na}\\
& \textbf{100\%} & \textbf{100\%}& 42.9\% & 14.2\%&38.1\% &42.9\% & & \\ \hline
\multirow{2}{*}{\footnotesize PIC-Align} & \textsl{\textbf{2285.23}} & \textsl{2285.34} & \textsl{2285.34} & \textsl{2285.34}& \textsl{2286.64}&\textsl{2289.12}
&\multirow{2}{*}{na} &\multirow{2}{*}{na} \\
& \textbf{100\%} & 90\% & 90\% &90\% & 80\%& 70\%& & \\\hline
\multirow{2}{*}{\footnotesize GM-Label} & \textsl{\textbf{-476.95}} & \multirow{2}{*}{na}&\multirow{2}{*}{na} &\textsl{\textbf{-476.95}} &\textsl{-476.95} & \textsl{486.42}
&\multirow{2}{*}{na} & \multirow{2}{*}{na}\\
& \textbf{100\%} & & &\textbf{100\%} &99.67\% &40\% & & \\\hline
\multirow{2}{*}{\footnotesize GM-Char} & \textsl{\textbf{-59550.67}} &\multirow{2}{*}{na} & \multirow{2}{*}{na}&\multirow{2}{*}{na} & -49519.44&-49507.98	
&\textsl{-49550.10} & \multirow{2}{*}{na}\\
& 86.1\% & & & & 11\% & 6\% & \textbf{89.1\%}& \\\hline
\multirow{2}{*}{\footnotesize GM-Montage} & \textsl{168298.00} &\multirow{2}{*}{na} &\multirow{2}{*}{na} & \textsl{\textbf{168220.00}} & \textsl{735193.0} & \textsl{235611.00}
&\multirow{2}{*}{na} &\multirow{2}{*}{na} \\
& \textbf{66.3\%} & & & 33.3\% &0\% &0\% & & \\\hline
\multirow{2}{*}{\footnotesize GM-Matching} & \textsl{44.19} & \multirow{2}{*}{na} & \textsl{\textbf{21.22}} & \multirow{2}{*}{na} & \textsl{32.38} & \textsl{5.5e10}
&\multirow{2}{*}{na} & \textsl{\textbf{21.22}}\\
& 0\% & &\textbf{100\%} & & 0\% & 0\%& & \textbf{100\%} \\\hline
\end{tabular}
\end{center}
\normalsize
\label{Table:ALL}
\end{table*}

\para{Evaluation protocol.} Following the standard protocol for assessing convex programs, we evaluate the duality gap and the primal/dual infeasibility of each algorithm:
\begin{align*}
\textup{gap} & = \frac{|\langle \bs{b}, \bs{y} \rangle-\langle \bs{C}, \bs{X} \rangle|}{1+|\langle \bs{b}, \bs{y} \rangle|+|\langle \bs{C}, \bs{X} \rangle|}, \\
\textup{inf} & = \max\Big\{\frac{\|\set{A}(\bs{X}) - \bs{b}\|_2+\|\min(\set{P}(\bs{X}),0)\|_2}{1+\|\bs{b}\|_2},\\
&\qquad\qquad \frac{\|\bs{C}+\set{A}^{*}(\bs{y}) - \set{P}^{*}(\bs{z})-\bs{S}\|_\mathrm{F}}{1+\|\bs{C}\|_\mathrm{F}}\Big\}
\normalsize
\end{align*}
As IPM-NC solves a different optimization problem, we report the gap between its optimal solutions with the ground-truth optimal solutions.

\para{Analysis of results.} We run each algorithm until the duality gap is below $1e-4$ or the maximum number of iterations is reached. Table~\ref{Table:SDP:Comparison} shows the running time, duality gap and maximum primal/dual infeasibility of each algorithm on each problem. We can see that SDPAD-LR generates results that are comparable to SDPAD and SDPNAL. However, SDPAD-LR turns out to be remarkably more efficient than SDPAD and SDPNAL on large-scale or sparse datasets. This is due to the fact that SDPAD-LR only requires computing the top eigenvalues, which is both memory and computationally efficient.

Both interior point methods (i.e., IPM-NC and MOSEK) have provable guarantees to generate more accurate results than other methods. However, MOSEK is not scalable to large data sets, as reported in Table~\ref{Table:SDP:Comparison}. IPM-NC is scalable to large-scale problems, as the number variables involved is small. However, as IPM-NC solves a non-convex optimization problem, it may easily get trapped into local minimals (e.g., on deer\_0034.K10.F100\_30markers and folding\_2BE6).

Finally, the multivariate weight update method MUL-Update turns out be inefficient on solving SDRs of MAP inference problems. This is due to the fact that MUL-Update is an approximate solver and it requires a lot of iterations to obtain an accurate solution.

\subsection{MAP Inference Evaluation}

\para{Experimental setup.} We compare SDR with the top-performing algorithms from OPENGM2~\cite{Kappes:2013:OPENGM2}. These algorithms include (i) BRAOBB~\cite{journals/aicom/OttenD12}, which is based on combinatorial search, (ii) $\alpha$-expansion~\cite{Szeliski:2008:CSE}--a move making method, (iii) MCBC~\cite{Kappes-2013}, which is based on a highly optimized max-cut solver, (iv) TRWS-LF2~\cite{Kolmogorov:2006:TRM}-- Tree-reweighted message passing, (vi) ogm-TRBP--- Tree-reweighted belief propagation~\cite{Szeliski:2008:CSE} and (vii) ficolofo~\cite{journals/ai/CooperGSSZW10}-- the top performing method on dense problems of PIC.

We use two measures to assess the performance of each method. The first measure evaluates for each method the mean objective values $\overline{f}$ of the resulting MAP assignments on each category. For the consistency with~\cite{Kappes:2013:OPENGM2}, we report $-\overline{f}$, meaning that the smaller the value, the better the algorithm. The second measure reports the percentage that each method achieves the best solution among all existing methods (not necessarily the global optimal). The higher the percentage, the better the algorithm.

\para{Performance.} Table~\ref{Table:ALL} summarizes the performance of SDPAD-LR v.s. state-of-the-art MAP inference algorithms on each type of problems. In each block, the top element (which is tilted) describes $-\overline{f}$ of each method on each category, and the bottom block describes the percentage of obtaining the best solution. We can see that the overall performance of SDPAD-LR is superior to each other individual algorithm. Except on GM-Matching, SDPAD-LR is the top performing on each other dataset. In contrast, each existing method either does not apply or generates poor results on one or several datasets. This shows the advantage of solving a strong convex relaxation of the MAP inference problem. Below we break down the performance on each benchmark.

\begin{itemize}
\itemsep0em
\item
\para{ORIENT.} SDPAD-LR is the leading method on ORIENT. The problems in ORIENT exhibit specific structures, i.e, the pair-wise potentials consist of approximately shifted permutation matrices. Experimentally, we found that SDR is usually tight on these problems. This explains the superior performance SDPAD-LR. In contrast, linear programming relaxations are not tight on ORIENT, and thus TRBP and TRWS only deliver moderate performance. Moreover, this structural pattern leads to huge search spaces for combinatorial algorithms (e.g., BRAOBB), and they can easily get stuck in local optimums.

\item\para{Dense problems.} SDPAD-LR also outperforms other methods on three dense categories from PIC. It achieves the best mean energy value as well as the highest percentage of obtaining the best solution. This again arises since SDR is tight on these problems.

\item\para{Sparse problems.} SDR yields comparable results with state-of-the-art algorithms on the three sparse categories from OPENGM2. GM-Label consists of problems where the standard LP relaxation is tight. On GM-Char which consists of large-scale binary problems, SDR is comparable to MCBC in the sense that SDR achieves a better mean energy value while MCBC attains a higher percentage of being the best solution. This arises because MCBC is a highly optimized solver designed for binary quadratic problems. On the other hand, SDPAD-LR is only an approximate SDP solver which, in some cases, may not converge to the global optimum due to numerical issues.

\item\para{GM-Matching.} SDR only yields moderate results on GM-Matching. This occurs because SDR is not tight on GM-Matching. In contrast, as GM-Matching is a small-scale problem, combinatorial optimization techniques such as BRAOBB and A-star are capable of finding globally optimal solutions.

\end{itemize}

\para{Running Times.} The running time of SDPAD-LR (including the rounding procedure) is of the same scale as other convex relation techniques. As shown in Table~\ref{Table:Stats}, our preliminary Matlab implementation takes less than 10 mins on small-scale problems (i.e. those in PIC-Object, GM-Matching and PIC-Label). On medium size problems, i.e., those in PIC-Folding, PIC-Align, GM-Char and ORIENT, the running time of SDPAD-LR ranges from 20 minutes to 1 hour. On large-scale problems from GM-Montage, SDPAD-LR takes around 8 hours on each problem. However, there is still huge room for improvement. One alternative is to use the eigenvalues computed in the previous iteration to accelerate the eigen-decomposition at the current iteration, which is left for future work.

%\subsection{Analysis of SDR and ADALM-SDP}

%\para{Tightness of SDR.} Encouragingly, SDR is pretty tight on a wide spectrum of problems. We have applied the MOSEK in CVX --- an accurate SDP solver, to solve SDR on PIC-Object. Numerical results indicate that SDR is exact on PIC-Object.
%For large-scale problems, we have applied the exactness solution conditions described in Section~\ref{Sec:Analysis} to check the global optimality of the solutions obtained from ADALM-SDP. It turns out that the majority of these solutions are guaranteed to be globally optimal.
%This highlights the power of SDR on real examples.
%as well as the practicability of the exact solution conditions.

%\para{Efficiency of ADALM-SDP.} ADALM-SDP is an approximate solver and thus may produce fractional solutions even when the solution to SDR is exact. However, we observe that when combined with appropriate iterative rounding procedures, ADALM-SDP generates a solution that is very similar to the one returned by accurate SDP solvers. For example, on PIC-Object, ADALM-SDP yields global optimum for 36 out of 37 instances.

\section{Conclusions}
\label{Sec:Conclusions}

In this paper, we have presented a novel semidefinite relaxation for second-order MAP estimation and proposed an efficient ADMM solver. We have extensively compared the proposed SDP solver with various state-of-the-art SDP solvers. Experimental results confirm that our SDP solver is much more scalable than prior approaches when applied to various MAP estimation problem, which enables us to apply SDR on large-scale datasets. Owing to the power of semidefinite relaxation, SDR proves superior to other top-performing MAP inference algorithms on a variety of benchmark datasets.

There are plenty of opportunities for future research. First, we would like to extend SDR to higher-order MAP problems. Moreover, it would be interesting to integrate SDR and combinatorial optimization techniques, which has the potential to boost the power of both. From the theoretical side, theoretical support for exact estimation with SDR would be one exciting direction for investigation. This would offer justification of the presented low-rank heuristic. On the other hand, as many combinatorial optimization problems can be formulated as MAP inference problems, such exact estimation conditions can shed light on the original combinatorial optimization problems.

\section*{Acknowledgments}
This work has been supported in part by NSF grants FODAVA 808515 and CCF 1011228, AFOSR grant FA9550-12-1-0372, ONR MURI N00014-13-1-0341, and a Google research award.

\appendix

% In the unusual situation where you want a paper to appear in the
% references without citing it in the main text, use \nocite

\appendix

\bibliography{matching}
\bibliographystyle{icml2014}

\end{document}